\documentclass[review,10pt]{ReportTemplate}

\usepackage{times}
\usepackage{graphicx}
\usepackage{bm}
\usepackage{paralist,algorithmic,algorithm}
\usepackage{amsmath,mathrsfs,amssymb}
\usepackage{url}            % simple URL typesetting
\usepackage{amsfonts}       % blackboard math symbols
\usepackage{nicefrac}       % compact symbols for 1/2, etc.
\usepackage{microtype}      % microtypography
\usepackage{booktabs}       % professional-quality tables
\usepackage{longtable}
\usepackage{threeparttable}
\usepackage{multirow}
\usepackage{soul}
\usepackage{subfigure}
\usepackage{wrapfig}
\usepackage[american]{babel}
\urlstyle{same}
\usepackage{algorithm}
\usepackage{algorithmic}
\usepackage{setspace}
\usepackage{stfloats}
\usepackage{lipsum}
\usepackage{multicol}

\newtheorem{thm}{Theorem}
\newtheorem{corollary}{Corollary}

\newenvironment{proof}{\textbf{Proof:}\ }{\hspace{\stretch{1}}$\square$\\}

\begin{document}
\begin{frontmatter}
\title{Bifurcation Spiking Neural Network}

\author{Shao-Qun Zhang}
\author{Zhao-Yu Zhang}
\author{Zhi-Hua Zhou}
\address{National Key Laboratory for Novel Software Technology\\
Nanjing University, Nanjing 210023, China\\~\\
\normalsize \{zhangsq,zhangzhaoyu,zhouzh\}@nju.edu.cn
}

\begin{abstract}
Spiking neural networks (SNNs) have attracted much attention due to their great potential for modeling time-dependent signals. The performance of SNNs depends not only on picking an apposite architecture and searching optimal connection weights as well as conventional deep neural networks, but also on the careful tuning of many hyper-parameters within fundamental spiking neural models. However, so far, there has been less systematic work on analyzing SNNs' dynamical characteristics, especially ones relative to these \emph{internal} hyper-parameters, which leads to whether SNNs are adequate for modeling actual data relies on fortune. In this work, we provide a theoretical framework for investigating spiking neural models from the perspective of dynamical systems. As a result, we point out that the LIF model with control rate hyper-parameters is a \emph{bifurcation} dynamical system. This point explains why the performance of SNNs is so sensitive to the setting of control rate hyper-parameters, leading to a recommendation that diverse and adaptive eigenvalues are beneficial to improve the performance of SNNs. Inspired by this insight, by enabling the spiking dynamical system to have adaptable eigenvalues, we develop the \emph{Bifurcation Spiking Neural Network} (BSNN), which has a laudable performance in supervised SNNs and is insensitive to the setting of control rates. Experiments validate the effectiveness of BSNN on several benchmark data sets, showing that BSNN achieves superior performance to existing SNNs and is robust to the setting of control rates.
\end{abstract}

\begin{keyword}
Spiking Neural Network, Leaky Integrate-and-Fire model, Control Rates, Eigenvalues, Bifurcation Dynamical System, Bifurcation Spiking Neural Networks
\end{keyword}
\end{frontmatter}

\section{Introduction} \label{sec:intro}
Spiking neural networks (SNNs) take into account the time of spike firing rather than simply relying on the accumulated signal strength in conventional neural networks, and thus offering the possibility for modeling time-dependent data~\citep{shimokawa1999,vanrullen2005}. The fundamental spiking neural model is usually formulated as a first-order parabolic equation with many biologically realistic (\emph{i.e.}, internal) hyper-parameters, \emph{e.g.}, the membrane capacitance hyper-parameter in Hodgkin-Huxley (HH) model~\citep{gerstner2002} and the membrane time hyper-parameter in Leaky Integrate-and-Fire (LIF) model~\citep{burkitt2006a,burkitt2006b}. Thus, the performance of SNNs depends not only on determining the neural network architecture and training connection weights as well as conventional deep neural networks but also on the careful tuning of these \emph{internal} hyper-parameters. Existing SNNs often follow the bio-plausible knowledge in neuroscience to set these internal hyper-parameters and then make minor a tuning of some hyper-parameters~\citep{hohn2001,carlson2014}. Usually, the traditional manners look clumsy and have unrobust performance with large computation due to the lack of systematic analysis on designating which hyper-parameters are sensitive and guiding how to tune them.

This work investigates the dynamical properties of the LIF-modeling SNNs, especially the influence of hyper-parameters on the model dynamics. As a result, we declare that the LIF model is a bifurcation dynamical system, which means that its topology depends sensitively on the control rate hyper-parameters. This result derives three essential conclusions: (1) The performance of SNNs is sensitive to the setting of control rates, which is consistent with the facts. (2) It is necessary to enable diverse and learnable control rates, corresponding to the eigenvalues of bifurcation dynamical systems, for achieving adaptive systems. This result argues the conventional manners that the control rates are neatly preset as a negative fixed value. (3) The role of control rates cannot be replaced by learnable connection parameters and other hyper-parameters.

However, training control rates is a very tricky challenge. Since control rates and connection weights are entangled during the training process, the approaches~\citep{hunter2012,lorenzo2017} of turning hyper-parameters in conventional neural networks cannot be directly used to solve this issue. An alternative way is to sample the control rates from a certain pre-defined distribution and find the optimal ones by alternating optimization. Nevertheless, this method usually succeeds on an apposite distribution and larger computation and storage. 

To tackle the challenges above and improve the performance of SNNs, we propose the \emph{Bifurcation Spiking Neural Network} (BSNN). By exploiting the bifurcation theory, we convert the issue of learning a group of adaptive control rates into a new problem of learning a collection of apposite eigenvalues. So BSNN overcomes the obstacle that controls rates interact with connection weights, leading to a robust control rate setting and achieves a laudable performance with considerably less computation and storage than the alternating optimization approaches. The experiments conducted on four benchmark data sets demonstrate the effectiveness of BSNN, showing that its performance surpasses existing SNNs and is robust to the setting of control rates.

Our main contributions are summarized as follows:
\begin{itemize}
	\item We provide a theoretical framework for studying the dynamical properties of spiking neural models, \emph{e.g.}, we show the LIF model is a bifurcation dynamical system in Section~\ref{sec:DS}. 
	\item We point out the fact that the control rate hyper-parameter, rather than other ones, is sensitive to the performance of SNNs with LIF neurons. 
	\item We present the BSNN, which has adaptive eigenvalues, leaving a robust setting of the control rate hyper-parameters, and achieves a better performance than the existing state-of-the-art SNN models.
	\item We perform numerical studies on several data sets to demonstrate the conclusions above and advantages of our proposed models.
\end{itemize}

The rest of this paper is organized as follows. Section~\ref{sec:rw} reviews the related works. Section~\ref{sec:snm} introduces some preliminary knowledge about spiking neural models. Section~\ref{sec:DS} establishes a theoretical analysis for SNNs based on dynamical system theory and provides the alternating optimization approaches for achieving adaptive systems. Section~\ref{sec:BSNN} formally presents the BSNN with a concrete implementation. The experiments are conducted in Section~\ref{sec:Experiments}. Finally, we conclude our work in Section~\ref{sec:Conclusion}.

\section{Related Works} \label{sec:rw}
We first review the approaches for setting and tuning hyper-parameters in SNNs. Existing SNNs, such as \citep{hohn2001}, \citep{jin2018}, and \citep{zhang2019}, often follow the bio-plausible knowledge in neuroscience to set these internal hyper-parameters and then make a fine-tuning of all hyper-parameters. \citep{kulkarni2018} describe several hyper-parameter tuning experiments achieved on the MNIST data set. Carlson et al.~\cite{carlson2014} employ evolutionary algorithms to optimize the hyper-parameters within spiking neural models. However, there still lacks a theoretical analysis for guiding whether all hyper-parameters are important, which hyper-parameter is sensitive, and how to tune them. Our work, thanks to dynamical system theory, provides a theoretical framework for investigating spiking neural models. We point out the dynamical properties of SNNs with different neuron models and give effective suggestions on the setting of sensitive hyper-parameters.

Zero-order alternating optimization has been widely discussed in neural-network-related researches. These approaches can roughly be divided into two categories: (1) Optimizing the architectures of neural networks. For example, Lorenzo et al.~\citep{lorenzo2017} use particle swarm optimization to select the architectures of DNNs. (2) Optimizing internal hyper-parameters. Bergstra and Bengio~\cite{bergstra2012} provide a classical way that randomly samples hyper-parameters from a prior distribution and finds the best one. However, due to the limitation of observation information, the zero-order alternating optimization approaches intrinsically require large computation and storage.

\section{Spiking Neural Model} \label{sec:snm}
In this work, we focus more on the LIF model, not only due to the ease with which the LIF model can be analyzed and simulated but also because this model is the most widely used in the intersection of SNN and Artificial Intelligence. Here, we review a general form of the LIF model, which with $M$-dimensional input signals $\boldsymbol{I}(t) = \{I_1(t), \dots, I_M(t)\}$
\begin{equation} \label{eq:LIFsm}
	\tau_m \frac{d u}{d t} = - u + R~ f( \boldsymbol{I(t)} ),
\end{equation}
where $u(t)$ denotes the membrane potential of the concerned neuron at time $t$, $\tau_m$ and $R$ are positive-valued hyper-parameters with respect to membrane time and membrane resistance, respectively, and $f$ represents the aggregation function, usually expressed in a linear formation, $f( \boldsymbol{I(t)} ) = \sum_{j=1}^{M} \mathbf{W}_{j}I_j(t)$, where $\mathbf{W}_{j}$ denotes the learnable connection weight corresponding to the $j$-th input channel. Based on the \emph{spike response model} scheme~\citep{gerstner1995}, the LIF equation has a general solution with $u_{rest}=0$ as follows
\begin{equation} \label{eq:solution}
	u(t) = \sum_{j=1}^{M}\mathbf{W}_{j} \left[~ \int_{t'}^{t} \exp\left( \frac{t'-s}{\tau_m} \right) I_j(s) ~ds ~\right],
\end{equation}
where $t'$ denotes the last firing time $t' = \max \{ s ~|~ u(s) = u_{firing}, s < t \}$. An output stimulus $S(t)$ is generated whenever the membrane potential $u(t)$ reaches a certain threshold $u_{firing}$ (firing threshold). We formulate this procedure using a spike excitation function
\[
f_e: u \rightarrow S, ~~\text{where}~~ S(t) \triangleq \left\lfloor \frac{u(t)}{u_{firing}} \right\rfloor .
\]
After firing, the membrane potential is instantaneously reset to a lower value $u_{rest}$ (rest voltage). Some researchers take the delayed-firing behaviors in neuroscience, and thus, add an absolute refractory period $\Delta_{abs}$~\citep{hunsberger2015} or a refractory kernel $\eta_{ref}(\tau_m)$~\citep{dumont2017} to the LIF model. So Equation~\ref{eq:solution} becomes
\begin{equation} \label{eq:ref}
	u(t) = \sum_{j=1}^{M}\mathbf{W}_{j} \left[~ \int_{t'}^{t} \exp\left( \frac{t'-s}{\tau_m} \right) I_j(s) ~ds ~\right] + \eta_{ref}(\tau_m) * S(t),
\end{equation}
where $*$ denotes the convolution operation and $\eta_{ref}(\tau_m) * S(t)$ is the refractory response of a neuron. 

Notice that the LIF model with the initial condition $u(0) = u_{rest}$ or $u(t') = u_{rest}$ leads to an Ordinary Differential Equation (ODE) dynamical system. The eigenvalues $\rho$ of this dynamical system are equal to the quotient of $-1$ and $\tau_m$ by solving the following abstract equations
\[
\left\{
\begin{aligned}
	\frac{d u}{d t} &= - \frac{1}{\tau_m} u , \\
	u(0) &= u_{rest} .
\end{aligned}\right.
\]
The information about how to calculate the eigenvalues of ODEs is detailed in Subsection~\ref{subsec:eigenvalues}.

\begin{figure}[t]
	\centering
	\includegraphics[width=0.9\textwidth]{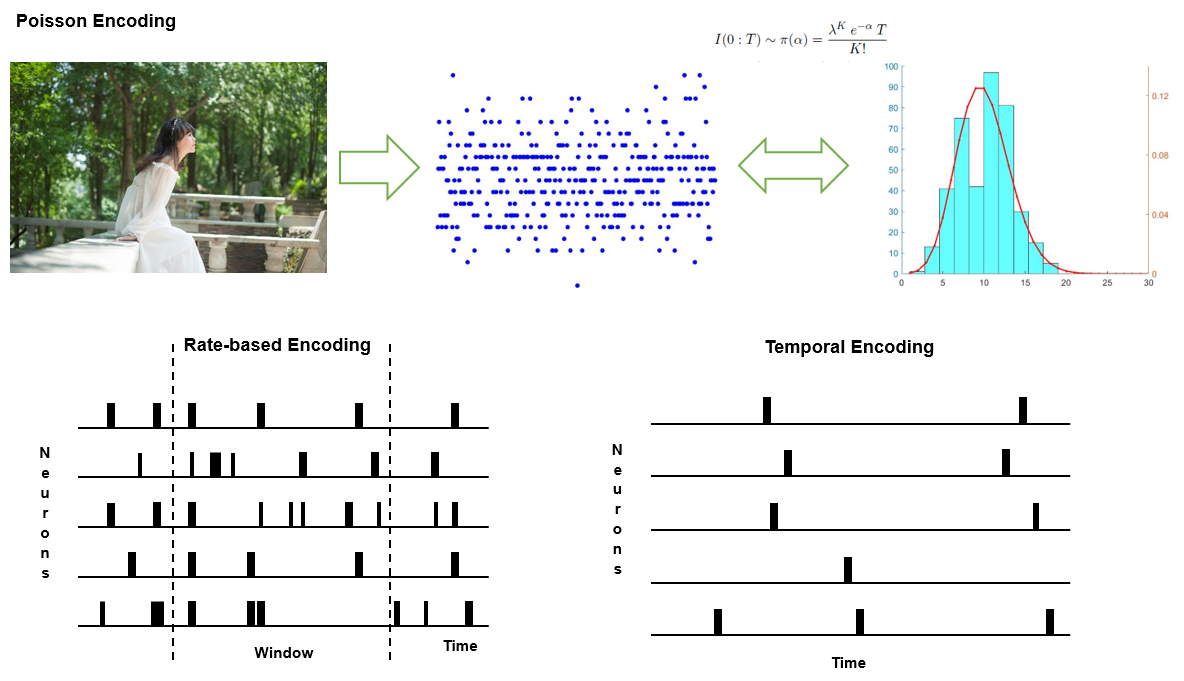}
	\label{fig:encoding}
	\caption{Illustrations of neural encoding in spiking versions.}
\end{figure} 
\subsection{Neural Encoding}
Before present to SNNs, the real input data should be pre-converted into a spiking version. There are two main categories of encoding approaches, \emph{rate-based encoding} and \emph{temporal encoding}, which are illustrated in Figure~\ref{fig:encoding}. In rate-based encoding~\citep{lobo2020}, the input data is encoded by the number of fired spikes within temporal windows. The representative approaches are encoded by a Poisson distribution and recorded by a Dynamic Vision Sensor~\citep{quiroga2005,anumula2018}, which is shown in Figure~\ref{fig:encoding}. In temporal encoding~\citep{yu2014}, the input data is encoded by the distance between time instances that fire spikes.

\subsection{Eigenvalues and Abstract Equations} \label{subsec:eigenvalues}
For a system of first-order linear differential equations
\[
\frac{d }{dt} \begin{pmatrix}
	u_1 \\
	\vdots \\
	u_n
\end{pmatrix} = \begin{pmatrix}
	\alpha_{11} &  \dots  &  \alpha_{1n} \\
	\vdots      &  \ddots  &  \vdots      \\
	\alpha_{n1} &  \dots  &  \alpha_{nn}
\end{pmatrix} \cdot \begin{pmatrix}
	u_1 \\
	\vdots \\
	u_n
\end{pmatrix} + \begin{pmatrix}
	f_1(t) \\
	\vdots \\
	f_n(t)
\end{pmatrix},
\]
we have its abstract formulation
\[
\frac{d }{dt} \begin{pmatrix}
	u_1 \\
	\vdots \\
	u_n
\end{pmatrix} = A \cdot \begin{pmatrix}
	u_1 \\
	\vdots \\
	u_n
\end{pmatrix} \quad\text{with}\quad A = \begin{pmatrix}
	\alpha_{11} &  \dots  &  \alpha_{1n} \\
	\vdots      &  \ddots  &  \vdots      \\
	\alpha_{n1} &  \dots  &  \alpha_{nn}
\end{pmatrix}  .
\]
These abstract equations are only related to the observation variables $u_1,\dots,u_n$. So the eigenvalues (spectrum values) of the matrix (operator) $A$ lead to the evolution rules of the dynamical system.

\section{Dynamical Properties of Spiking Neural Models} \label{sec:DS}
From Section~\ref{sec:snm}, the LIF model has two internal hyper-parameters, \emph{i.e.}, the membrane time $\tau_m$ and membrane resistance $R$. In this section, we will investigate the dynamical properties of spiking neural models concerning the hyper-parameters above. For convenience, we define a new hyper-parameter $\gamma = -1/\tau_m$, called \emph{control rate}.

\subsection{Bifurcation Dynamics of LIF-modeling SNNs} \label{subsec:LIF}
Our main conclusion is listed as follows.
\begin{thm} \label{thm:snn}
	Given the initial condition $u(0) = u_{rest}$ or $u(t') = u_{rest}$, the dynamical system led by one layer of LIF neurons is a bifurcation dynamical system, and control rate $\gamma$ is the corresponding bifurcation hyper-parameter.
\end{thm}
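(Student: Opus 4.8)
The plan is to show that the one-layer LIF dynamical system undergoes a qualitative change in its phase-portrait topology as the control rate $\gamma = -1/\tau_m$ crosses a critical value, which is precisely the definition of a bifurcation parameter. First I would write the LIF evolution equation in the abstract form of Subsection~\ref{subsec:eigenvalues}: with the initial condition $u(0) = u_{rest}$ (equivalently $u(t') = u_{rest}$), Equation~\ref{eq:LIFsm} reduces to the homogeneous linear system $\dot u = \gamma\, u$ whose defining operator $A$ is the scalar $\gamma$ (or, for a layer of $n$ neurons sharing the control rate, $A = \gamma I_n$), so the spectrum is $\{\gamma\}$ — this is the eigenvalue $\rho = -1/\tau_m$ already recorded in Section~\ref{sec:snm}. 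The key observation is that the spike-excitation mechanism $f_e$ together with the reset to $u_{rest}$ makes the effective dynamics piecewise-linear with a state-dependent switching surface $\{u = u_{firing}\}$, so the long-run behaviour is governed by whether trajectories of the linear flow reach that surface.

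The heart of the argument is a case analysis on the sign of $\gamma$. When $\gamma < 0$ (the biologically conventional regime, $\tau_m > 0$), the origin $u = u_{rest} = 0$ is an asymptotically stable fixed point of the subthreshold flow: in the absence of sufficient input the membrane potential decays monotonically back to rest and no firing occurs, so the system has a single attracting equilibrium and the orbit structure is trivial. When $\gamma > 0$, the origin becomes a repeller: any perturbation grows exponentially, the trajectory is driven to the threshold $u_{firing}$ in finite time, triggers a spike, is reset, and the process repeats — producing a periodic (tonic) spiking orbit rather than a stable rest state. When $\gamma = 0$ ($\tau_m \to \infty$) the system degenerates to a pure integrator $\dot u = R f(\boldsymbol I(t))$ with no restoring term, a non-hyperbolic borderline case. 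Thus the number and stability type of invariant sets (stable equilibrium versus limit-cycle/repeller) changes as $\gamma$ passes through $0$, i.e. the topological equivalence class of the phase portrait is not locally constant at $\gamma = 0$; by definition this makes the LIF system a bifurcation dynamical system with $\gamma$ as the bifurcation hyper-parameter. I would also note monotone dependence of the firing period and of the decay/growth time-constant on $\gamma$ to emphasise that the model is sensitive throughout, not merely at the single critical value — which is the point the paper needs downstream.

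The main obstacle I anticipate is making the notion of ``topological change'' precise in the presence of the reset map and the external forcing $f(\boldsymbol I(t))$: strictly speaking the forced system is non-autonomous, so I would either (i) fix a constant or sign-definite input regime and argue about the autonomous subthreshold system on each inter-spike interval, or (ii) pass to the return map induced on the threshold surface and exhibit the change from ``no fixed point of the return map / trajectories absorbed at rest'' to ``well-defined periodic return'' as $\gamma$ changes sign. A second, more technical point is handling the refractory term $\eta_{ref}(\tau_m) * S(t)$ from Equation~\ref{eq:ref}: since $\eta_{ref}$ itself depends on $\tau_m$, I would argue it only rescales the inter-spike dynamics and does not create or destroy invariant sets, so it leaves the bifurcation structure intact. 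Once the return-map picture is set up, the sign-of-$\gamma$ trichotomy is essentially immediate, so I expect the bookkeeping around the switching/reset to be the only genuinely delicate part.
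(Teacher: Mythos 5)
Your proposal is correct, and it reaches the same essential conclusion as the paper — the qualitative behaviour changes exactly when the eigenvalue $\gamma=-1/\tau_m$ of the abstract system crosses $0$ — but by a genuinely different route. The paper multiplies Equation~\ref{eq:LIFsm} by $u$ to build a Lyapunov-like energy $\mathcal{H}(u,t)=u^2+\tfrac{2R}{\tau_m}F(u,t)$, shows $d\mathcal{H}/dt=2\gamma u^2$ (and $d\mathcal{H}_N/dt=2\gamma|\boldsymbol{u}|^2$ for a layer of $N$ neurons), and reads the trichotomy dissipative/conservative/diffusive directly off the sign of $\gamma$, identifying the bifurcation with the sign change of the eigenvalue $\rho=\gamma$. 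You instead work directly with the abstract linear operator $A=\gamma I_n$, classify the subthreshold equilibrium as attractor, non-hyperbolic integrator, or repeller according to $\mathrm{sgn}(\gamma)$, and argue the change of topological equivalence class of the phase portrait — supplementing this with a return-map/hybrid-systems treatment of the threshold-and-reset mechanism and of the refractory kernel, which the paper's proof does not address at all. What each buys: your spectral/return-map framing is closer to the textbook definition of bifurcation and is more careful about the non-autonomous forcing and the reset discontinuity (arguably a more rigorous account of what ``topological change'' means here), while the paper's energy-function construction is what gets reused verbatim for the $N$-neuron layer, for the HH and Izhikevich corollaries, and for Theorem~\ref{thm:BSNN}, so it buys a uniform framework for the rest of the paper. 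No gap in your argument; just note that the two viewpoints coincide because $d\mathcal{H}/dt=2\gamma u^2$ is precisely the statement that $|u|$ grows or decays according to the sign of the eigenvalue you analyse.
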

According to the bifurcation theory in dynamical systems, a bifurcation occurs when a small smooth change of the parameter values (often the bifurcation hyper-parameters pass through a critical point) causes a sudden topological change in its behavior~\citep{onuki2002,kuznetsov2013}. Theorem~\ref{thm:snn} shows that the bifurcation dynamical system led by one layer of LIF neurons is structurally unstable, and its performance is sensitive to the setting of the control rates.
\begin{proof}
	We start by formulating the total energy of the LIF model. From the perspective of ODE, the LIF model with initial condition $u(0) = u_{rest}$ or $u(t') = u_{rest}$ guides a continuous dynamical system. Multiply both sides of Equation~\ref{eq:LIFsm} by the membrane potential $u$, as commonly used in~\citep{hirsch2012}, we can obtain the energy of an one-LIF-neuron dynamical system
	\begin{equation}  \label{eq:energy}
		\mathcal{H}(u,t) = u^2 + \frac{2R}{\tau_m} F(u,t) = u^2 - 2\gamma R F(u,t),
	\end{equation}
	where $u^2$ indicates the kinetic energy and $F$ is the primitive function of $f \cdot u$, representing the non-potential forces. So $\mathcal{H}(u,t)$ is a Lyapunov-like function. Direct calculations show that:
	\begin{equation}  \label{eq:energy_time_der}
		\frac{d \mathcal{H}}{d t} = 2 u \frac{d u}{d t} + \frac{2R}{\tau_m} f u  
		= 2u \left( \frac{d u}{d t} + \frac{2R}{\tau_m} f \right)
		= -\frac{2}{\tau_m} u^2 = 2 \gamma u^2.
	\end{equation}
	Based on Equations~\ref{eq:energy} and~\ref{eq:energy_time_der}, both the energy function $\mathcal{H}(u,\gamma,t)$ and its time derivative are dominated by the control rate. If the control rate is set as a negative constant as usual, we have $d \mathcal{H} / d t < 0$. Thus, the dynamical system led by one LIF neuron is energy-dissipating. This means that information will be continuously lost during the learning process of one LIF neuron and the concerned spiking neuron appears to hinder spike excitation. When $\gamma = 0$, one LIF neuron becomes a conservative system where the conversation from received signals to biological spikes is without damage, \emph{i.e.}, the Integrate-and-Fire (IF) model. When $\gamma > 0$, the system energy would be increasing since $d \mathcal{H} / d t > 0$, and thus, the concerned neuron is encouraged to firing spikes.
	
	Next, we extend the aforementioned results relative to one LIF neuron to a fully-connected feed-forward SNN. Similarly, by adding apposite initial conditions to the concerned spiking equations, the total energy of a layer of $N$ LIF neurons can be given by
	\begin{equation}  \label{eq:energy_N}
		\mathcal{H}_N(\boldsymbol{u},t) = |\boldsymbol{u}|^2 + \frac{2R}{\tau_m} F(\boldsymbol{u},t),
	\end{equation}
	where $\boldsymbol{u}$ is a $N$-dimensional vector that the $i$-th component represents the membrane potential of the $i$-th spiking neuron, and $F(\boldsymbol{u},t)$ is the primitive function of $\langle \boldsymbol{f}, \boldsymbol{u} \rangle$. The time derivative of the total energy function can be calculated by
	\begin{equation}  \label{eq:energy_time_der_N}
		\frac{d \mathcal{H}_N}{d t} = 2 \gamma |\boldsymbol{u}|^2.
	\end{equation}
	Similar to Equation~\ref{eq:energy_time_der}, the control rate is proportional to the eigenvalue $\rho$ of LIF equation, \emph{i.e.}, $\gamma = \rho$. So the time derivative of energy function, described by Equation \ref{eq:energy_time_der}, becomes
	\[
	\frac{d \mathcal{H}_N}{d t} = 2\rho |\boldsymbol{u}|^2.
	\] 
	A bifurcation occurs once the control rate $\gamma$, as well as the eigenvalue $\rho$ of this dynamical system, crosses the critical value 0; $\rho > 0$ leads to an unstable manifold, while $\rho < 0$ leads to a stable one. Therefore, $\mathcal{H}_N(\boldsymbol{u},t)$, described by Equation~\ref{eq:energy_N}, coincides with a bifurcation dynamical system concerning control rate $\gamma$. This completes the proof.
\end{proof}

Based on the results above, the control rate $\gamma$ relative to the membrane time hyper-parameter $\tau_m$ plays a crucial role on SNNs, directly affecting the topology of the dynamical system. As conventionally preset $\gamma$ to a negative value in the whole network, SNNs comprise the stacking of dissipating systems. Thus, the energy will be continuously lost layer by layer during the learning process of SNNs. A better way is to adaptively learn the control rate so that SNNs can determine the desired system mode (\emph{i.e.}, dissipating, conservative, diffusion systems, or a mixture of the three) according to the actual environment. On the other hand, existing SNNs usually employ unified control rates, that is, $\gamma_1 = \dots = \gamma_N$. According to Theorem~\ref{thm:snn} and Equation~\ref{eq:energy_time_der_N}, this manner makes the topology structure of whole SNNs tied to only one pre-given hyper-parameter, thus greatly weakening the representation ability of SNNs. To revise this manner, we force the control rates in one layer to be diverse, \emph{i.e.}, each neuron has an exclusive control rate parameter. In summary, the traditional manners that employ unified control rates and preset all control rates to one fixed and negative constant impede the flexibility of SNNs. To achieve adaptive systems, the control rates in SNNs should be diverse and learnable.

The change of the membrane resistance hyper-parameter $R$ and connection weights $\mathbf{W}$ only affect the system energy (see Equations~\ref{eq:energy} and \ref{eq:energy_N}), but has nothing to do with their intrinsic dynamical properties (according to Equations~\ref{eq:energy_time_der} and \ref{eq:energy_time_der_N}). Therefore, the role of control rates cannot be replaced by learnable connection weights and other hyper-parameters.

\subsection{Approaches for Parameterizing Control Rates} \label{subsec:parameterizing}
An intuitive idea for achieving adaptive SNN systems is to parameterize the control rates. However, training SNNs with parametric control rates is a brand-new challenge since there is almost no experience of training hyper-parameters in neural networks to borrow. The difficulty is twofold: (1) The existing SNNs are almost trained based on the spike response model scheme. This leads to the membrane potential in Equation~\ref{eq:solution} is dominated by an indirect product interaction of connection weights $\mathbf{W}_j$ and control rate $\gamma$. So we cannot optimize the control rates and connection weights in parallel. (2) The roles of control rates and connection weights are distinctly different; each control rate is convolved with the received spikes aggregated by connection weights. So the spike errors caused by control rates spread temporally, while connection weights only transmit errors between layers. Formally, let $E$ be the loss, then we have
\[
\frac{\partial E}{\partial \gamma_k} = \frac{\partial E}{\partial u_k} \frac{\partial u_k}{\partial \gamma_k} = \frac{\partial E}{\partial u_k} \left( \sum_{j=1}^{M}\mathbf{W}_{j} \left[~ \int_{t'}^{t} \exp\left( \gamma_k(s-t') \right) (s-t') I_j(s) ~ds ~\right] \right),
\]
where the subscript $k$ denotes the $k$-th spiking neuron. When the firing period of the concerned spiking neuron is too small, \emph{i.e.}, $(t-t') \to 0$, the gradients appear to vanish as $\partial u_k / \partial \gamma_k \to 0$; when the firing period is slightly longer, \emph{i.e.}, $t \gg t'$, the gradients will be of explosion though time. To sum up, simply utilizing gradient-based methods to train the control rates is easier to fall into the quagmire of gradient vanishing and explosion.

An alternative approach to alleviate the issue is to employ alternating optimization for estimating control rate hyper-parameters. The key idea is to regard the control rates as a group of hyper-parameters drawn from a prior distribution so that solving for each learnable variable (connection weights or control rates) reduces to well-known methods. More formally, we consider a fully-connected feed-forward architecture and list this optimization procedure as follows:
\begin{itemize}
	\item[\textbf{Initialization:}] Sampling a group of control rates $\boldsymbol{\gamma}=\{\gamma_k\}$ from a pre-given distribution, such as the uniform distribution $\mathcal{U}[-1,1]$. Then spikes spread according to
	\[
	\left\{\begin{aligned}
		S_j^{1}(t) &= I_j(t), \\
		u_k^{l+1}(t) &= \sum_{j=1}^{n_l}\mathbf{W}_{kj}^l \left[ \int_{t'}^{t} 
		\exp\left( \gamma_k(s-t') \right)S_j^{l}(s) ds \right], \\
		S_k^{l+1}(t) &\leftarrow f_e\left( u_k^{l+1}(t) \right),
	\end{aligned}\right.
	\]
	where $n_l$ is the number of spiking neurons in the $l$-th layer and $\mathbf{W}$ is the connection weight matrix. The superscript $l$ denotes the $l$-th layer and the subscripts $k$ and $j$ indicate the $k$-th and $j$-th spiking neurons, respectively. 
	\item[\textbf{Update connection weights:}] How to update the connection weights $\boldsymbol{W}$ with fixed $\boldsymbol{\gamma}$ depends on the choice of error-propagation techniques. Here, we employ a seminal work, SLAYER~\citep{shrestha2018} as a basic model.
	\item[\textbf{Update $\boldsymbol{\gamma}$:}] We solve $\arg\min\nolimits_{\boldsymbol{\gamma}} Loss(\boldsymbol{\gamma} | \boldsymbol{I}, \boldsymbol{Y}; \boldsymbol{W}) $, where $\boldsymbol{Y}$ denotes the supervised signals. Thus, existing algorithms, such as alternating coordinate descent can be applied directly to find a collection of apposite control rates.
\end{itemize}
The approaches based on alternating optimization place larger demands on computation and storage, and usually converge slowly in neural networks.

\subsection{Dynamical Properties for Other Spiking Neural Models} \label{subsec:others}
Adhering to the framework mentioned in Subsection~\ref{subsec:LIF}, we here introduce several famous ones and analyze their dynamical properties. The first concerned model is the HH model~\citep{hodgkin1952}, which usually be described as follows
\[
\left\{\begin{aligned}
	\tau_u \frac{ ~u}{d t} &= -g_1 u - g_2 um^3h - g_3 un^4 + f(\boldsymbol{I}(t)), \\
	\tau_n \frac{d n}{d t} &= -n, \\
	\tau_m \frac{d m}{d t} &= -m, \\
	\tau_h \frac{d h}{d t} &= -h, \\
\end{aligned}\right.
\]
where $u$ is the concerned electro-chemical membrane variable, $n$, $m$, and $h$ describe the opening and closing of the voltage-dependent channels, $\tau_u$ is the capacitance of the membrane $u$, $\tau_n$, $\tau_m$, and $\tau_h$ are the corresponding membrane time hyper-parameters, $-g_1$, $g_2$, and $g_3$ denote the conductance parameters for the different ion channels (\emph{e.g.}, sodium $\mathrm{Na}$ and potassium $\mathrm{K}$), and $f$ represents the aggregation function of input signals $\boldsymbol{I}(t)$, see Section~\ref{sec:snm}.
\begin{corollary}
	The HH model with positive hyper-parameters (\emph{i.e.}, $g_1$, $\tau_u$, $\tau_n$, $\tau_m$, and $\tau_h$) is a dissipative system.
\end{corollary}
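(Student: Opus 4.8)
The plan is to reproduce the energy/Lyapunov computation of Theorem~\ref{thm:snn}, now for the four-dimensional HH vector field $(u,n,m,h)$, equipped with admissible initial conditions (in particular $u(0)=u_{rest}$ and $n(0),m(0),h(0)\in[0,1]$) so that the HH equations define a genuine non-autonomous ODE dynamical system. Multiplying each scalar equation by its own state variable --- i.e.\ forming $\frac{d}{dt}(x^2)=2x\dot x$ for $x\in\{u,n,m,h\}$ --- and summing suggests the total energy
\[
\mathcal{H}_{HH}(u,n,m,h,t) = u^2 + n^2 + m^2 + h^2 - \frac{2}{\tau_u}\, G(u,t),
\]
where $G$ is a primitive of $f(\boldsymbol{I}(t))\,u$, playing the same role as $F$ in Equation~\ref{eq:energy}, namely absorbing the non-potential forcing.

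The second step is to differentiate $\mathcal{H}_{HH}$ along trajectories. The three gating equations contribute $-\frac{2}{\tau_n}n^2-\frac{2}{\tau_m}m^2-\frac{2}{\tau_h}h^2$, and the membrane equation contributes $-\frac{2g_1}{\tau_u}u^2-\frac{2g_2}{\tau_u}u^2 m^3 h-\frac{2g_3}{\tau_u}u^2 n^4$ once the $G$-term has cancelled the forcing $\frac{2}{\tau_u}fu$. Hence
\[
\frac{d\mathcal{H}_{HH}}{dt} = -\frac{2g_1}{\tau_u}u^2 - \frac{2g_2}{\tau_u}u^2 m^3 h - \frac{2g_3}{\tau_u}u^2 n^4 - \frac{2}{\tau_n}n^2 - \frac{2}{\tau_m}m^2 - \frac{2}{\tau_h}h^2 .
\]
With $g_1,\tau_u,\tau_n,\tau_m,\tau_h>0$ the leak and gating terms are strictly negative away from the origin, so $d\mathcal{H}_{HH}/dt<0$ and the HH flow is energy-dissipating, which is the claim. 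Equivalently, one can bypass the forcing and simply compute the divergence of the HH vector field, $\mathrm{div}\,X=-\frac{1}{\tau_u}\bigl(g_1+g_2 m^3 h+g_3 n^4\bigr)-\frac{1}{\tau_n}-\frac{1}{\tau_m}-\frac{1}{\tau_h}$, and invoke Liouville's formula to conclude that phase-space volume contracts along the flow.

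The step I expect to be the main obstacle is pinning down the sign of the nonlinear coupling terms $u^2 m^3 h$ and $u^2 n^4$ (equivalently $g_2 m^3 h+g_3 n^4$ in the divergence), since the conductances $g_2,g_3$ are not assumed positive in the statement and $m^3h$ depends on the sign of $m$. I would handle this in one of two ways: (i) observe that $[0,1]^3$ is forward-invariant for $(n,m,h)$ under the linear decay dynamics with nonnegative initial data, so $m^3h\ge0$ and $n^4\ge0$ automatically and, with nonnegative ion-channel conductances, the coupling terms are pointwise nonpositive; or (ii) note that $n,m,h\to0$ exponentially, hence the coupling terms decay exponentially and $\limsup_{t\to\infty} d\mathcal{H}_{HH}/dt \le -\frac{2g_1}{\tau_u}u^2<0$, which already yields a bounded absorbing set and thus dissipativity in the Levinson sense. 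A secondary subtlety, the same one flagged after Equation~\ref{eq:energy_time_der}, is that $G$ is only a primitive in $t$, so $\mathcal{H}_{HH}$ is a time-dependent Lyapunov function and the inequality must be read along the flow rather than as a statement about an autonomous potential; this should be stated explicitly so the corollary rests on the same footing as Theorem~\ref{thm:snn}.
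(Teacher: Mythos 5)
Your proof is correct in substance and rests on the same Lyapunov/energy multiplier scheme as the paper, but the two arguments split the nonlinear ion-channel terms differently. The paper absorbs the cubic/quartic couplings into the energy itself, writing $\mathcal{H}_{HH}(u,n,m,h,t) = |\boldsymbol{u}|^2 + o(|\boldsymbol{u}|) + 2F(u,t)$ with the high-order term $o(|\boldsymbol{u}|)$ proportional to $-(g_2 m^3 h + g_3 n^4)u^2$, so that its stated time derivative contains only the four quadratic terms $-2\bigl(g_1 u^2/\tau_u + n^2/\tau_n + m^2/\tau_m + h^2/\tau_h\bigr)$ and the signs of $g_2$, $g_3$ and of the gating variables never have to be discussed; you instead keep the energy purely quadratic (plus the forcing primitive) and carry $-\tfrac{2g_2}{\tau_u}u^2m^3h - \tfrac{2g_3}{\tau_u}u^2n^4$ in the derivative, which obliges you to argue these terms are nonpositive via forward invariance of $[0,1]^3$ for the gates (or their exponential decay), under the extra, biophysically natural, assumption $g_2, g_3 \ge 0$. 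What your route buys is rigor on exactly the point the paper glosses over: differentiating the paper's high-order term along the flow does not literally cancel the coupling terms (it generates additional pieces from $\dot{m}$, $\dot{h}$, $\dot{n}$, $\dot{u}$), so the paper's clean derivative is effectively your computation with the cross terms silently dropped, i.e.\ implicitly assumed sign-definite; what the paper's route buys is a derivative formula whose negativity depends only on the hyper-parameters named in the corollary, keeping its hypotheses minimal on their face. Your divergence/Liouville alternative is a genuinely different, forcing-free argument that the paper does not use; it is a fine complement, but it certifies phase-volume contraction rather than decay of $\mathcal{H}_{HH}$, so if you rely on it you should state explicitly which notion of \emph{dissipative} is being established. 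Your closing remark that $G$ (like $F$ in Equation~\ref{eq:energy}) is a primitive in $t$, so the inequality must be read along trajectories, is a worthwhile clarification that the paper leaves implicit.
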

\begin{proof}
	The energy function of the HH model can be given by $\mathcal{H}_{HH}(u,n,m,h,t) = |\boldsymbol{u}|^2 + o(|\boldsymbol{u}|) + 2 F(u,t)$, where $\boldsymbol{u} = (u,n,m,h)$ is a short notation and $o(|\boldsymbol{u}|) = -2 ( g_2m^3h + g_3n^4h ) u^2$ denotes the high-order term of $|\boldsymbol{u}|$, and $F$ is the primitive function of $f \cdot u$, representing the non-potential forces. Thus, we can obtain its time derivative
	\[
	\frac{d \mathcal{H}_{HH}}{d t} = - 2 (g_1\frac{u^2}{\tau_u} + \frac{n^2}{\tau_n} + \frac{m^2}{\tau_m} + \frac{h^2}{\tau_h} ).
	\]
	Therefore, we have $d \mathcal{H}_{HH} / d t <0$ since $g_1, \tau_u, \tau_n, \tau_m, \tau_h >0$ This completes the proof.
\end{proof}

We also investigate Izhikevich's neuron model~\citep{izhikevich2003}, which is a good compromise between biophysical plausibility and computational cost. It is usually simple formulated by the following coupled equations
\[
\left\{\begin{aligned}
	\frac{d u}{d t} &= a_u u^2 + b_u u -w + f(\boldsymbol{I}(t)), \\
	\frac{d w}{d t} &= a_w( b_w u - w ),
\end{aligned}\right.
\]
where $u$ and $w$ are dimensionless membrane variables, $a_u$, $b_u$, $a_w$, and $b_w$ are (positive) dimensionless hyper-parameters, and $f$ represents the corresponding aggregation function of input signals $\boldsymbol{I}(t)$. Izhikevich's neuron model usually employs a group of typical values, \emph{i.e.}, $a_u=0.04$, $b_u=5$, $a_w=0.02$, and $b_w=0.2$.
\begin{corollary}
	Izhikevich's neuron model leads to a hyperbolic system.
\end{corollary}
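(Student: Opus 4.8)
The plan is to prove this corollary with exactly the energy-function technique of Theorem~\ref{thm:snn} and the HH corollary, so that ``hyperbolic'' is read as a statement about the \emph{signature of the quadratic form governing the system's energy flow}, not about isolated equilibria. Writing $\boldsymbol{u}=(u,w)$, I would first build a Lyapunov-like energy $\mathcal{H}_{Iz}(\boldsymbol{u},t) = |\boldsymbol{u}|^2 + o(|\boldsymbol{u}|) + 2F(u,t)$, in exact analogy with Equation~\ref{eq:energy_N} and the HH corollary, where $F$ is the primitive of $f\cdot u$ carrying the non-potential forces and $o(|\boldsymbol{u}|)$ absorbs the cubic correction forced by the quadratic nonlinearity $a_u u^2$. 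Multiplying the two Izhikevich equations by $u$ and $w$ respectively and adding, the force contribution is handled by the $2F$ term, precisely as in the LIF and HH derivations.

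The next step is to differentiate $\mathcal{H}_{Iz}$ along trajectories and isolate its dominant quadratic part. A direct calculation gives
\[
\frac{d \mathcal{H}_{Iz}}{d t} = 2a_u u^3 + \boldsymbol{u}^{\top} P \boldsymbol{u}, \qquad P = \begin{pmatrix} 2b_u & a_w b_w - 1 \\ a_w b_w - 1 & -2a_w \end{pmatrix},
\]
where the leading behaviour is controlled by the quadratic form $\boldsymbol{u}^{\top} P \boldsymbol{u}$ and the cubic $2a_u u^3$ is the higher-order remainder sitting in $o(|\boldsymbol{u}|)$. This is the structural analogue of $d\mathcal{H}/dt = 2\gamma u^2$ for the LIF model (Equation~\ref{eq:energy_time_der}) and of the negative-definite derivative for HH, so the entire classification reduces to the signature of the single matrix $P$.

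I would then read off the system type from $P$. Because every Izhikevich hyper-parameter is positive, the diagonal entries $2b_u$ and $-2a_w$ carry opposite signs and
\[
\det P = -4 a_w b_u - (a_w b_w - 1)^2 < 0 ,
\]
so $P$ has one positive and one negative eigenvalue and is \emph{indefinite}: its level sets are hyperbolas. Hence $d\mathcal{H}_{Iz}/dt$ is positive along one invariant direction and negative along the other, the system simultaneously carries a stable and an unstable manifold, and—by the same principle used for LIF and HH, namely that the eigenvalue signs of the energy-flow operator dictate the topology—Izhikevich's model is a hyperbolic (saddle-type) dynamical system with hyperbolic $P$.

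The main obstacle is not the sign count, which is immediate from $\det P<0$, but justifying that the cubic term $2a_u u^3$ genuinely belongs to the higher-order remainder $o(|\boldsymbol{u}|)$, so that the indefinite signature of $P$ controls the qualitative topology. I expect to settle this by working in the standard operating regime where the typical values (\emph{e.g.} the small $a_u=0.04$) make the quadratic form dominate near the relevant state, mirroring how the HH corollary quietly relegates the cross-channel terms $g_2 m^3 h$ and $g_3 n^4$ to its own $o(|\boldsymbol{u}|)$; the delicate point is to state this localization cleanly rather than to carry out any further computation.
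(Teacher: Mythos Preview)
Your proposal is correct and follows essentially the same energy-function route as the paper: define $\mathcal{H}_I = |(u,w)|^2 + o(|u|) + 2F(u,t)$ with $o(|u|)=2a_u u^3$, differentiate along trajectories, and conclude hyperbolicity from opposite-sign eigenvalues of the governing matrix. The only cosmetic difference is that the paper writes $d\mathcal{H}_I/dt = 2(u,w)\,A\,(u,w)^{\top}$ with the non-symmetric $A=\begin{pmatrix} b_u & 0\\ a_w b_w & -a_w\end{pmatrix}$ and reads off the eigenvalues $\rho_1=b_u>0$, $\rho_2=-a_w<0$ directly from the triangular form (without pausing over the cubic remainder you flag), then appends a brief saddle-point/attractor interpretation; your symmetrized $P$ with $\det P<0$ reaches the identical conclusion.
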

\begin{proof}
	Write its energy function $\mathcal{H}_{I}(u,w,t) = |(u,w)|^2 + o(|u|) + 2 F(u,t)$, where $o(|u|) = 2 a_u u^3$ denotes the high-order term of $|u|$ and $F$ is the primitive function of $f \cdot u$, representing the non-potential forces. And then, the time derivative is at your fingertips
	\[
	\frac{d\mathcal{H}_{I}}{dt} = 2 (u,w) ~A~ (u,w)^T \quad \text{with} \quad A = \begin{pmatrix}
		b_u &  0 \\
		a_w b_w & -a_w \\
	\end{pmatrix}  .
	\]
	Izhikevich's neuron model has more complicated dynamical properties. Since the matrix $A$ has two sign-opposite eigenvalues, \emph{i.e.}, $\rho_1 = b_u >0$ and $\rho_2 = -a_w <0$, the tangent space becomes a hyperbolic manifold. So the neuron has two fixed points: a saddle point and an attractor. Around the saddle point, the dynamical system is unstable; the membrane potential will not stay at this point and the neuron either tends to fire or returns to near $0$. However, the dynamical system around the attractor is relatively stable. Once the membrane potential trap in this region, it is difficult for neurons to fire again.
\end{proof}

\section{Bifurcation Spiking Neural Networks} \label{sec:BSNN}
We present the BSNN for achieving an SNN with adaptive dynamics. In contrast to the LIF model that the eigenvalue is proportional to its control rate, BSNN employs a group of trainable parameters to separate the eigenvalues of the spiking neuron model from the control rates, making it possible to achieve diverse and learnable eigenvalues. Formally, we present the bifurcation spiking neurons model as follows
\begin{equation}  \label{eq:preBSNN}
	\frac{\partial \boldsymbol{u} (t)}{\partial t} = \gamma\boldsymbol{u}(t) + \boldsymbol{\lambda} \boldsymbol{u}^* + \frac{R}{\tau_m} \boldsymbol{f}(\boldsymbol{I}),
\end{equation}
where $\gamma$ is the control rate and and $\boldsymbol{\lambda}$ is a bifurcation parameter matrix. The vector $\boldsymbol{u}^* = (u_1^*, \dots, u_N^*)$ portrays the mutual promotion between neurons. Here, we unfold the $k$-th variable as $u_k^* = \sum\nolimits_{i \neq k} u_i + o(|u_k|)$, where $o(|u_k|)$ denotes the high-order term of $u_k$. Thus, Equation~\ref{eq:preBSNN} becomes
\[
\left\{\begin{aligned}
	\frac{\partial u_1(t)}{\partial t} &= \gamma~ u_1(t) + \sum_{i\neq 1} \lambda_{1i} u_i + o(|u_1|) + \frac{R}{\tau_m} f_1(\boldsymbol{I}) ,\\
	\vdots & \quad\quad\quad\quad\quad\quad\quad\quad\quad \vdots\\
	\frac{\partial u_N(t)}{\partial t} &= \gamma~ u_N(t) + \sum_{i\neq N} \lambda_{Ni} u_i + o(|u_N|) + \frac{R}{\tau_m} f_N(\boldsymbol{I})  ,
\end{aligned}\right. 
\]
where
\[
f_k(\boldsymbol{I}) = \sum_{j=1}^{M} \mathbf{W}_{kj}I_j(t).
\]
The basic building block of BSNN is a system of equations concerning a cluster of spiking neurons. Invoke this cluster of spiking neurons to constitute a layer and reuse Equation~\ref{eq:preBSNN} layer by layer. Then we can establish a feed-forward multi-layer network.

\subsection{Adaptive Dynamical System}
To ensure BSNN becomes an adaptive dynamical system, we need to verify that the time derivative of the energy function produced by BSNN has learnable and diverse eigenvalues. Similar to the analysis in Section \ref{sec:DS}, we have the total energy of the BSNN model as follows
\[
\mathcal{H}_B(\boldsymbol{u},t) = |\boldsymbol{u}|^2 + \frac{2R}{\tau_m} F(\boldsymbol{u},t).
\]
Correspondingly, the time derivative of energy function can be calculated by
\[
\frac{d \mathcal{H}_B}{d t} = 2 \boldsymbol{u}^T L_{\boldsymbol{\lambda}} \boldsymbol{u} \quad\text{with}
\quad  L_{\boldsymbol{\lambda}} = \begin{pmatrix}
	\gamma &  & \\
	& \ddots & \\
	& & \gamma
\end{pmatrix} + \boldsymbol{\lambda}.
\]
Further, we can declare that BSNN has non-trivial solutions for achieving adaptive dynamical system.
\begin{thm}  \label{thm:BSNN}
	If the bifurcation hyper-parameters $\lambda_{ij}$ are all great than 0, there are at most $2^{N-1}$ bifurcation solutions in Equation~\ref{eq:preBSNN}.
\end{thm}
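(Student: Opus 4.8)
The plan is to read a \emph{bifurcation solution} of Equation~\ref{eq:preBSNN} as a nonzero steady state of the input-free block, i.e.\ a solution $\boldsymbol{u}\neq\boldsymbol{0}$ of $(\gamma I + \boldsymbol{\lambda})\boldsymbol{u} + \boldsymbol{g}(\boldsymbol{u}) = \boldsymbol{0}$, where $\boldsymbol{g}(\boldsymbol{u}) = (o(|u_1|),\dots,o(|u_N|))$ collects the high-order terms of Equation~\ref{eq:preBSNN}. Two structural remarks will frame the count. First, $\boldsymbol{u}=\boldsymbol{0}$ is always a solution (the trivial branch), so it never has to be counted among the bifurcation solutions. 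Second, the hypothesis $\lambda_{ij}>0$ makes $\boldsymbol{\lambda}$ entrywise nonnegative and irreducible, so by Perron--Frobenius it has a simple dominant real eigenvalue with a strictly positive eigenvector and strictly smaller remaining spectrum; hence $L_{\boldsymbol{\lambda}}=\gamma I+\boldsymbol{\lambda}$ is singular for only finitely many values of $\gamma$, and (consistently with the proof of Theorem~\ref{thm:snn} and Equation~\ref{eq:energy_time_der_N}) a bifurcation of Equation~\ref{eq:preBSNN} occurs exactly when $-\gamma$ passes through an eigenvalue of $\boldsymbol{\lambda}$.

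I would then localize the problem. At a bifurcation value $L_{\boldsymbol{\lambda}}$ has a kernel of some dimension $d\le N$, and a Lyapunov--Schmidt reduction sends the steady-state equation to $d$ algebraic equations in the $d$ amplitudes along the kernel, whose leading part is homogeneous of the same order as $\boldsymbol{g}$ (at most quadratic). Counting the nontrivial solutions of this reduced polynomial system is what produces the bound: a B\'ezout/BKK estimate on the reduced equations, once one discards the component that reproduces the trivial branch and accounts for the action $\boldsymbol{u}\mapsto-\boldsymbol{u}$ that organizes solutions into antipodal pairs, collapses the naive count to $2^{N-1}$ in the fully degenerate case $d=N$ and to something no larger when $d<N$. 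The hypothesis $\lambda_{ij}>0$ re-enters here to certify that the reduced system is in general position — its leading forms share no common component, so no solutions escape to infinity or collide — which is precisely what turns the B\'ezout-type estimate into an honest upper bound.

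The hard part is controlling the interplay between the non-symmetric coupling $\boldsymbol{\lambda}$ and the under-specified high-order terms. Because $\boldsymbol{\lambda}$ need not be symmetric, the block is not a gradient flow, so one cannot simply count critical points of the energy $\mathcal{H}_B$ by Morse theory; and because $o(|u_k|)$ is pinned down only by its order, one must argue that the constant $2^{N-1}$ is insensitive to the precise nonlinearity, i.e.\ that it is the generic count and that degeneracies only lower it. I would handle this by phrasing the bound through the Newton polytopes of the $N$ steady-state equations and Bernstein's theorem, so that it depends only on which monomials can occur, then verifying that the resulting mixed volume — after removing the trivial solution and quotienting by the antipodal symmetry — equals $2^{N-1}$, and finally invoking Perron--Frobenius positivity to guarantee non-degeneracy for generic $\gamma$ so that this number is attained in the generic case and dominates in all cases. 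Carrying out the mixed-volume computation and confirming that the triviality/symmetry reduction is exactly a factor of two is, I expect, the main obstacle; the remaining steps are bookkeeping.
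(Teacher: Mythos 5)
There is a genuine gap: the step that actually produces the number $2^{N-1}$ is never carried out, and as set up it cannot be. Your plan reduces the count to a B\'ezout/Bernstein (mixed-volume) bound on the reduced steady-state system, but Equation~\ref{eq:preBSNN} only specifies the nonlinearity through its order, $o(|u_k|)$ --- neither its degree nor its monomial support is given. A Newton-polytope/mixed-volume bound ``depends only on which monomials can occur,'' as you say, but which monomials can occur is exactly the information that is missing, so the asserted value $2^{N-1}$ cannot be extracted from the hypotheses (a cubic tail, for instance, would push a Bernstein count toward $3^N$-type numbers). The factor-of-two reduction via the antipodal action $\boldsymbol{u}\mapsto-\boldsymbol{u}$ has the same problem: that map is a symmetry only if the high-order terms are odd, which nothing in the statement guarantees (compare the quadratic terms in Izhikevich's model discussed in Subsection~\ref{subsec:others}). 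You flag the mixed-volume computation and the symmetry bookkeeping as ``the main obstacle,'' but those are precisely the content of the theorem, so what you have is a programme, not a proof.

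You are also counting a different object than the paper does. The paper's argument never counts nonzero equilibria of the input-free block at all: it writes the linear part as $L_{\boldsymbol{\lambda}}=\gamma I+\boldsymbol{\lambda}$, observes that its eigenvalues are $\gamma$ shifted by the eigenvalues of the coupling matrix, works out the $2$-neuron case explicitly (one eigenvalue pinned negative, the other crossing zero as $\lambda_1\lambda_2$ crosses $\lambda_c=\gamma^2$), and then counts bifurcation possibilities as the at most $2^{N-1}$ sign configurations of the remaining $N-1$ eigenvalues --- an elementary spectral/sign-pattern count with no Lyapunov--Schmidt reduction and no algebraic geometry. Your Perron--Frobenius observation (positivity and irreducibility of $\boldsymbol{\lambda}$ give a simple dominant eigenvalue) is a sound ingredient and is closer in spirit to that spectral route; if you pursue the proof, the productive direction is to use it to control how many eigenvalues of $\gamma I+\boldsymbol{\lambda}$ can change sign and count the resulting configurations, rather than to try to certify genericity for a solution count whose defining equations are not pinned down.
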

\begin{proof}
	Theorem~\ref{thm:BSNN} can be roughly proved by the following steps. First, finding the characteristic roots of our proposed BSNN model. According to Equation~\ref{eq:preBSNN}, we can obtain its abstract representation
	\[
	\frac{d\boldsymbol{u}}{dt} = L_{\boldsymbol{\lambda}}\boldsymbol{u} + G(\boldsymbol{u},\boldsymbol{\lambda}) \quad\text{with}\quad L_{\boldsymbol{\lambda}} = A + B_{\boldsymbol{\lambda}} \quad\text{and}\quad G(\boldsymbol{u},\boldsymbol{\lambda})=o(|\boldsymbol{u}|) ,
	\]
	where
	\[
	A = \begin{pmatrix}
		\gamma &  & \\
		& \ddots & \\
		&   & \gamma
	\end{pmatrix} \quad\text{and}\quad B_{\boldsymbol{\lambda}} = \begin{pmatrix}
		0      & \lambda_{12}    & \dots   & \lambda_{1n} \\
		\lambda_{21} &    0      & \dots   & \lambda_{2n} \\
		\vdots    & \vdots       & \ddots   & \vdots       \\
		\lambda_{n1} & \lambda_{n(n-1)}     & \dots    & 0
	\end{pmatrix}.
	\]
	Suppose that the eigenvalues of the matrix $B_{\boldsymbol{\lambda}}$ are $\beta_1,\dots,\beta_n$. So the eigenvalue $\rho_i$ of $L_{\boldsymbol{\lambda}}$ can be calculated as the sum of that of $A$ and that of $B_{\boldsymbol{\lambda}}$, that is,
	\[
	\rho_i = -\gamma + \beta_i.
	\]
	Next, we can elucidate the bifurcation solutions relative to the eigenvalues. For simplicity, we take the 2-neuron model as an example, that is, 
	\[
	A = \begin{pmatrix}
		\gamma & 0 \\
		0   & \gamma
	\end{pmatrix} \quad\text{and}\quad B_{\boldsymbol{\lambda}} = \begin{pmatrix}
		0   & \lambda_1\\
		\lambda_2 & 0
	\end{pmatrix}.
	\]
	Let $D_1 = 2\gamma$ and $D_2 = \gamma^2 - \lambda_1\lambda_2$. when $\Delta = D_1^2 - 4 D_2 = \lambda_1\lambda_2 \geq 0$, $L_{\boldsymbol{\lambda}}$ has two real eigenvalues
	\[
	\rho_1 = \frac{-D_1-\sqrt{D_1^2-4D_2}}{2} \quad\text{and}\quad  \rho_2 = \frac{-D_1+\sqrt{D_1^2-4D_2}}{2}.
	\]
	Obviously, $\rho_1$ must be less than zero, whereas $\rho_2$ is indefinite. Let $\lambda_c = \gamma^2$ be the critical threshold, then the bifurcation solutions of Equation~\ref{eq:preBSNN} are dominated by one pair of bifurcation eigenvalues
	\[
	\left\{ \begin{aligned}
		\rho_1 &= \frac{-D_1-\sqrt{D_1^2-4D_2}}{2} < 0, \\
		\rho_2 &= \frac{-D_1+\sqrt{D_1^2-4D_2}}{2}\begin{cases}
			<0, &\lambda_1\lambda_2 < \lambda_c;\\
			=0, &\lambda_1\lambda_2 = \lambda_c;\\
			>0, &\lambda_1\lambda_2 > \lambda_c.
		\end{cases}
	\end{aligned} \right.
	\]
	As long as the product of $\lambda_{1}$ and $\lambda_2$ is greater than 0, there exists at least one non-trivial solution of Equation~\ref{eq:preBSNN}. In detail, when $\lambda_1\lambda_2 < \lambda_c$, both eigenvalues are negative, and thus, the whole dynamical system consists of two dissipative sub-systems, where both spiking neurons tend to hinder spike excitation. When $\lambda_1\lambda_2 = \lambda_c$, the whole dynamical system consists of a dissipative sub-system relative to the negative eigenvalue $\rho_1$ and a conservative one that $\rho_2$ is equal to 0. When $\lambda_1\lambda_2 > \lambda_c$, a new bifurcation phenomenon occurs, the whole dynamical system becomes structurally unstable, intuitively, one neuron still works in a ``leaky" mode, but the other one contributes to spike excitation.
	
	The existence of bifurcation eigenvalues is equivalent to the existence of non-trivial solutions of Equation~\ref{eq:preBSNN}, and one pair of bifurcation solutions induces a group of apposite eigenvalues for achieving adaptive dynamical systems. Generally, for the case of $N$ neurons, the solution of Equation~\ref{eq:preBSNN} possesses at most $2^{N-1}$ bifurcation solutions.
\end{proof}

Based on the results of Theorem~\ref{thm:BSNN}, the eigenvalues of Equation~\ref{eq:preBSNN} are dominated by a series of bifurcation parameters $\boldsymbol{\lambda}$. So we can convert the issue of searching for apposite control rate hyper-parameters into a new problem of how to train the bifurcation parameters. Therefore, even if the control rate is fixed, BSNN can still achieve an adaptive dynamic system. The training procedure for BSNN is introduced in the next subsection.

\subsection{Implementation}
Consider a feed-forward BSNN with $M$ pre-synaptic input channels and $N$-dimensional spiking neurons, and approximate the mutual promotion from the $i$-th neuron to the $k$-th neuron is caused by the last spike of neuron $i$, noted as $S_i(t'_i)$, where $t'_i = \max \{ s | u_i(s) = u_{firing}, s < t_i \}$. For the $k$-th neuron, we have
\begin{equation} \label{eq:feedforward}
	\frac{d u_k(t)}{d t} = \gamma u_k(t) + \sum_{i=1, i\neq k}^{N} \lambda_{ki} S_i(t'_i) +  \sum_{j=1}^{M} \mathbf{W}_{kj} I_j(t).
\end{equation}
According to Equation~\ref{eq:feedforward}, BSNN has two types of learnable parameters, \emph{i.e.}, bifurcation parameters $\boldsymbol{\lambda}$ and connection weights $\mathbf{W}$, where $\boldsymbol{\lambda}$ is linearly independent to $\mathbf{W}$ at time $t$. Thus, BSNN avoids the problem of parameters entanglement.

Akin to the spike response model scheme~\citep{gerstner1995}, Equation~\ref{eq:feedforward} has a closed-form solution
\begin{equation} \label{eq:synaptic term}
	u_k(t) = \int_{t'}^{t} \exp\left( \gamma(s-t') \right) \cdot Q_k(s) ds \quad\text{with}\quad Q_k(t) = \sum_{j=1}^{M} \mathbf{W}_{kj} I_j(t) + \sum_{i\neq k} \lambda_{ki} S_i(t'_i).
\end{equation}
Finally, the generated spike is transmitted to the next neuron via the spike excitation function $f_e$.

\subsection{Error Backpropagation in BSNN}
BSNN with supervised signals can also be optimized via error backpropagation. Firstly, we denote the input (spike) sequence to a neuron as the following general form~\citep{huh2018}
\[
I_j(t) = \sum_{firing} \epsilon_j\left( t-t_j^{firing} \right),
\]
where $t_j^{firing}$ is the spike time of the $j$-th input and $\epsilon(t)$ is a corresponding Dirac-delta function. Summing up the loss of the $k$-th target supervised signal $\hat{S_k}(t)$ related to $S_k(t)$ in time interval $[0,T]$: 
\begin{equation} \label{eq:cost}
	E_k = \frac{1}{2} \int_{0}^{T} E_k(t) dt 
	= \frac{1}{2} \int_{0}^{T}
	\left(S_k(t) - \hat{S_k}(t) \right)^2 dt .
\end{equation}
So for time $t$, we have
\begin{equation} \label{eq:1}
	\frac{\partial E_k(t)}{\partial \mathbf{W}_{kj}} 
	= \frac{\partial E_k(t)}{\partial S_k} 
	\frac{\partial S_k}{\partial u_k} 
	\frac{\partial u_k}{\partial \mathbf{W}_{kj}} ,
\end{equation}
where the first term is the error backpropagation of the excitatory neurons, the second term is that of the generated spikes with respect to the membrane potential, and the third term denotes the that of basic bifurcation neuron. Plugging Equation~\ref{eq:synaptic term} and Equation~\ref{eq:cost} into Equation~\ref{eq:1}, we have
\[
\frac{\partial E_k(t)}{\partial \mathbf{W}_{kj}} = \left(S_k(t) - \hat{S_k}(t) \right)
f_e'(u_k)
\delta_{j}(t) \quad\text{with}\quad \delta_{j}(t) = \frac{\epsilon_j(t)}{\tau_m}
\exp\left( \gamma t \right) .
\]
However, the derivative of the spike excitation function $f_e'(u)$ is a persistent problem for training SNNs with supervised signals. Recently, there have emerged many seminal approaches for addressing this problem. In this paper, we directly employ the result of~\citep{shrestha2018}. Therefore, we obtain the backpropagation pipeline related to connection weights $\mathbf{W}_{kj}$
\[
\nabla_{\mathbf{W}_{kj}} E = \int_{0}^{T} \frac{\partial E_k(t)}{\partial \mathbf{W}_{kj}} dt.
\]
Similarly, the correction formula with respect to $\lambda_{ki}$ is given by
\[
\nabla_{\lambda_{ki}} E = 
\int_{0}^{T} \left( \hat{S_k}(t)-S_k(t) \right)
f_e'(u_k)
S_i(t'_i) \gamma
\exp\left( \gamma t \right) dt.
\]
We can also add a learning rate $\eta$ to help convergence, just like most deep artificial neural networks.

Here, BSNN is implemented by an extended backpropagation algorithm. Compared with the existing SNNs, BSNN only needs to calculate one more set of gradients, \emph{i.e.}, $\nabla_{\lambda_{ki}} E$ during feedback. The records of $S_i(t'_i)$ do not cause additional storage because we intrinsically need the membrane potential values of each spiking neuron during the gradient calculation procedure as shown in Equation~\ref{eq:1}. So both the computation and storage of BSNN are considerably less in comparison with the alternating optimization approaches. \\

In summary, the proposed BSNN has at least three advantages: (1) BSNN employs learnable parameters to adjust eigenvalues, and thus, its topology becomes flexible, enabling SNNs to capture the adaptive learning behavior of spiking neurons to the environment change. (2) Since the new-added bifurcation parameters $\boldsymbol{\lambda}$ are independent of the connection weights $\mathbf{W}$, the two will not be entangled and conflict when using the spike response model scheme to train SNNs. (3) The performance of BSNN depends less on the control rates.

\section{Experiments} \label{sec:Experiments}
In this section, we conducted experiments on four benchmark data sets to evaluate the functional performance of BSNN. The experiments are performed to discuss the following questions:
\begin{itemize}
	\item[Q1:] Is the performance of BSNN comparable with state-of-the-art SNNs?
	\item[Q2:] Does the performance of BSNN surpass that of alternating optimization, especially in terms of accuracy and efficiency?
	\item[Q3:] Concerning BSNN, is the performance robust to the control rate? In which conditions?
\end{itemize}

\textbf{Data Sets}: (1) The MNIST handwritten digit data set\footnote{\url{http://yann.lecun.com/exdb/mnist/}} comprises a training set of 60,000 examples and a testing set of 10,000 examples in 10 classes, and each example is centered in a $28 \times 28$ image. Using Poisson encoding, we produce a list of spike signals with a formation of $784 \times T$ binary matrices, where $T$ denotes the encoding length and each row represents a spike sequence at each pixel. (2) The Neuromorphic-MNIST (N-MNIST) data set\footnote{\url{https://www.garrickorchard.com/datasets/n-mnist}}~\citep{orchard2015} is a spiking version of the original frame-based MNIST data set. Each example in N-MNIST was converted into a spike sequence by mounting the ATIS sensor on a motorized pan-tilt unit and having the sensor move while it views MNIST examples on an LCD monitor. It consists of the same 60,000 training and 10,000 testing samples as the original MNIST data set, and is captured at the same visual scale as the original MNIST data set ($28 \times 28$ pixels) with both ``on" and ``off" spikes. (3) The Fashion-MNIST data set\footnote{\url{https://www.kaggle.com/zalando-research/fashionmnist}} consists of a training set of 60,000 examples and a testing set of 10,000 examples. And each example is a 28x28 grayscale image, associated with a label from 10 classes. (4) The Extended MNIST-Balanced (EMNIST)~\citep{cohen2017} data set is an extension of MNIST to handwritten, which contains handwritten upper \& lower case letters of the English alphabet in addition
to the digits, and comprises 112,800 training and 18,800 testing samples for 47 classes.

\begin{table}[!htb]
	\centering
	\caption{Parameter Setting of BSNN on Various Data Sets.}
	\label{tab:paras}
	\resizebox{0.8\textwidth}{!}{%
		\begin{tabular}{c | c c c c c}
			\toprule
			\textbf{Parameters} {\textbf{Value}} & \textbf{MNIST} & \textbf{N-MNIST} & \textbf{Fashion-MNIST} & \textbf{EMNIST} \\
			\midrule
			Batch Size                & 32    & 32    & 32    & 64   \\
			Encoding Length $T$       & 300   & 300   & 400   & 400  \\
			Expect Spike Count(True)  & 100   & 80    & 100   & 140  \\
			Expect Spike Count(False) & 10    & 5     & 10    & 0    \\
			Learning Rate $\eta$      & 0.01  & 0.01  & 0.001 & 0.01 \\
			Refractory Period         & 2 ms  & 1 ms  & 2 ms  & 2 ms  \\
			Time Constant of Synapse $\tau_s$ & 8 ms  & 8 ms  & 8 ms  & 8 ms  \\
			\bottomrule
	\end{tabular} }
\end{table}
\textbf{Customization}: The pre-processing steps for these experiments are the same as~\citep{pillow2005,quiroga2005,anumula2018}, that is, each static image of (1) MNIST, (3) Fashion-MNIST, and (4) EMNIST is transformed as a spike sequence using Poisson Encoding, while each instance in N-MNIST was encoded by a Dynamic Audio / Vision Sensor (DAS / DVS). Note that the input data fed up to SNNs needs to go through some special processing, \emph{i.e.}, neural encoding that converts the static and non-spiking image to spikes. For these image classification tasks, we set 10 output spiking neurons corresponding to the classification labels. The output label of SNNs is the one with the greatest spike count. Table~\ref{tab:paras} lists the typical constant values in BSNNs. 

\textbf{Contenders}: We also employ two types of contenders to competing with the proposed BSNN: (1) several state-of-the-art SNNs with the spike response model scheme and (2) alternating optimization algorithms, see Subsection~\ref{subsec:parameterizing}. In this work, all SNN models are without any convolution structure. And the alternating optimization algorithms pre-sample a group of control rates from two uniform distributions, $U_1 = \mathcal{U}[-1,0]$ and $U_2 = \mathcal{U}[-1,1]$. For example, SLAYER-$U_1$ denotes an alternating optimization method based on SLAYER, which draws control rate hyper-parameters from $\mathcal{U}[-1,0]$.

\textbf{Experimental Results}: Table~\ref{tab:benchmark} lists the comparative performance (accuracy) and configurations (setting and epoches) of the contenders and BSNN on 3 digit data sets. As we can see, BSNN performs best against other competing approaches, achieving very superior testing accuracy (\emph{i.e.}, more than 99\% on MNIST, around 99.24\% on NMNIST, and more than 91\% on Fashion-MNIST). It is a laudable result for SNNs. In addition, the alternating optimization algorithms, \emph{i.e.}, both SLAYER-$U_1$ and SLAYER-$U_2$ steadily surpass the original SLAYER algorithm, which demonstrates the way of achieving diverse and adaptive control rates is significant and effective for improving the performance of SNNs.
\begin{table*}[t]
	\centering
	\caption{The comparative performance of the contenders and BSNN.}
	%\vspace{1mm}
	\label{tab:benchmark}
	\resizebox{1\textwidth}{!}{%
		\begin{threeparttable}
			\begin{tabular}{@{}cccccc@{}}
				\toprule
				\textbf{Data Sets}  &  \textbf{Contenders}  &  \textbf{Accuracy} (\%)
				& \textbf{Setting}  &  \textbf{Control Rate} ($\gamma$) & \textbf{Epochs} \\
				\midrule
				\multirow{9}{*}{MNIST} 
				& Deep SNN~\citep{o2016}
				& 97.80 & 28$\times$28-300-300-10 $\spadesuit$ & - & 50 \\
				& Deep SNN-BP~\citep{lee2016} & 98.71 & 28$\times$28-800-10 & - & 200 \\
				& SNN-EP $\heartsuit$         & 97.63 & 28$\times$28-500-10 & - & 25 \\
				& HM2-BP~\citep{jin2018}
				& 98.84 $\pm$ 0.02 & 28$\times$28-800-10     & - & 100 \\
				& SNN-L~\citep{lotfi2020}  & 98.23 $\pm$ 0.07 & 28$\times$28-1000-R28-10 & - & - \\
				& SLAYER~\citep{shrestha2018}
				& 98.39 $\pm$ 0.04 & 28$\times$28-500-500-10 & - & 50  \\
				& SLAYER-$U_1$ $\clubsuit$
				& 98.53 $\pm$ 0.03 & 28$\times$28-500-500-10 & - & -   \\
				& SLAYER-$U_2$ & 98.59 $\pm$ 0.01 & 28$\times$28-500-500-10 & - & - \\
				& BSNN (this work)
				& \textbf{99.02 $\pm$ 0.04} & 28$\times$28-500-500-10 & -0.21 & 50 \\
				\midrule
				\multirow{7}{*}{N-MNIST}
				& SKIM~\citep{cohen2016}  & 92.87 & 2*28$\times$28-10000-10 & - & -   \\
				& Deep SNN-BP             & 98.78 & 2*28$\times$28-800-10   & - & 200 \\
				& HM2-BP       & 98.84 $\pm$ 0.02 & 2*28$\times$28-800-10   & - & 60  \\
				& SLAYER       & 98.89 $\pm$ 0.06 & 2*28$\times$28-500-500-10 & - & 50 \\
				& SLAYER-$U_1$ & 99.01 $\pm$ 0.01 & 2*28$\times$28-500-500-10 & - & -  \\
				& SLAYER-$U_2$ & 99.07 $\pm$ 0.02 & 2*28$\times$28-500-500-10 & - & -  \\
				& BSNN (this work) 
				& \textbf{99.24 $\pm$ 0.12} & 2*28$\times$28-500-500-10 & -0.49 & 50 \\
				\midrule
				\multirow{6}{*}{Fashion-MNIST}
				& HM2-BP     & 88.99             & 28$\times$28-400-400-10 & - & 15   \\
				& SLAYER     & 88.61 $\pm$ 0.17  & 28$\times$28-500-500-10 & - & 50   \\
				& SLAYER-$U_1$  & 90.53 $\pm$ 0.04  & 28$\times$28-500-500-10 & - & - \\
				& SLAYER-$U_2$  & 90.61 $\pm$ 0.02  & 28$\times$28-500-500-10 & - & - \\
				& ST-RSBP~\citep{zhang2019}
				& 90.00 $\pm$ 0.13 & 28$\times$28-400-R400-10 $\diamondsuit$ & - & 30 \\
				& BSNN (this work)
				& \textbf{91.22 $\pm$ 0.06} & 28$\times$28-500-500-10 & -0.32 & 50   \\
				\midrule
				\multirow{6}{*}{EMNIST}
				& eRBP~\citep{neftci2017}
				& 78.17             & 28$\times$28-200-200-47 & - & 30   \\
				& HM2-BP     & 84.43 $\pm$ 0.10  & 28$\times$28-400-400-10 & - & 20   \\
				& SNN-L      & 83.75 $\pm$ 0.15  & 28$\times$28-1000-R28-10 & - & - \\
				& SLAYER     & 85.73 $\pm$ 0.16  & 28$\times$28-500-500-47 & - & 50   \\
				& SLAYER-$U_2$  &  86.62 $\pm$ 0.03  & 28$\times$28-500-500-47 & - & 50 \\
				& BSNN (this work)
				& \textbf{87.51 $\pm$ 0.23} & 28$\times$28-500-500-47 & -0.37 & 50   \\
				\bottomrule
			\end{tabular}
			\begin{tablenotes}
				\item[$\spadesuit$]:-300-300- denotes two hidden layers with 300 spiking neurons, while -800- is one hidden layer with 800 spiking neurons.
				\item[$\heartsuit$]: SNN-EP~\citep{o2019} proposes an implementation for training SNN with equilibrium propagation.
				\item[$\clubsuit$]: -$U_1$ and -$U_2$ indicate the alternating optimization algorithms with parametric control rates sampled from $U_1$ and $U_2$, respectively.
				\item[$\diamondsuit$] : R400 represents a recurrent layer of 400 spiking neurons.
			\end{tablenotes}
		\end{threeparttable}
	}
\end{table*}

\begin{figure}[!htb]
	\centering
	\subfigure[Example 4881.]{
		\begin{minipage}[t]{1\textwidth}
			\includegraphics[width=0.9\textwidth]{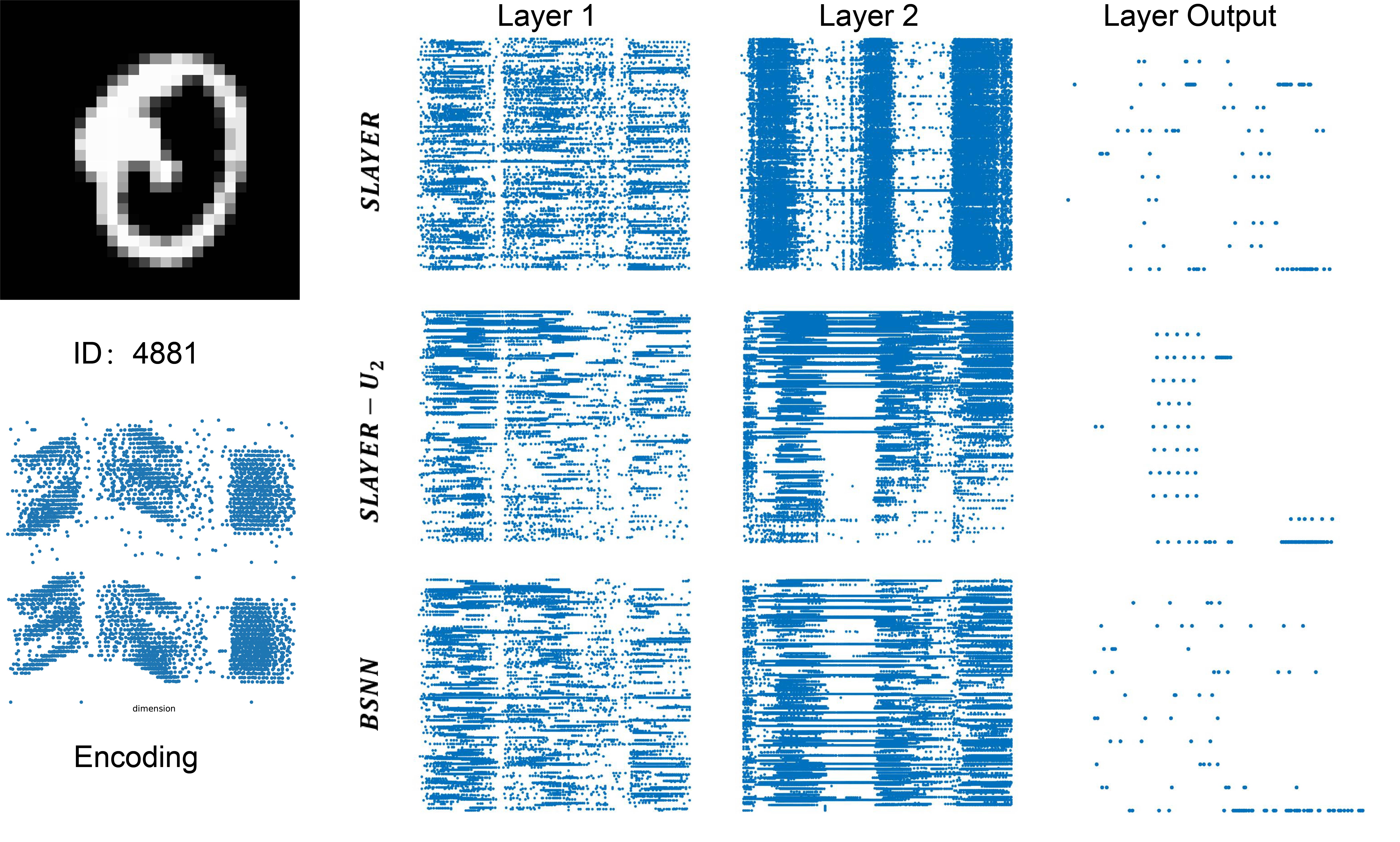}
		\end{minipage}
	}
	\subfigure[Example 6429.]{
		\begin{minipage}[t]{1\textwidth}
			\includegraphics[width=0.9\textwidth]{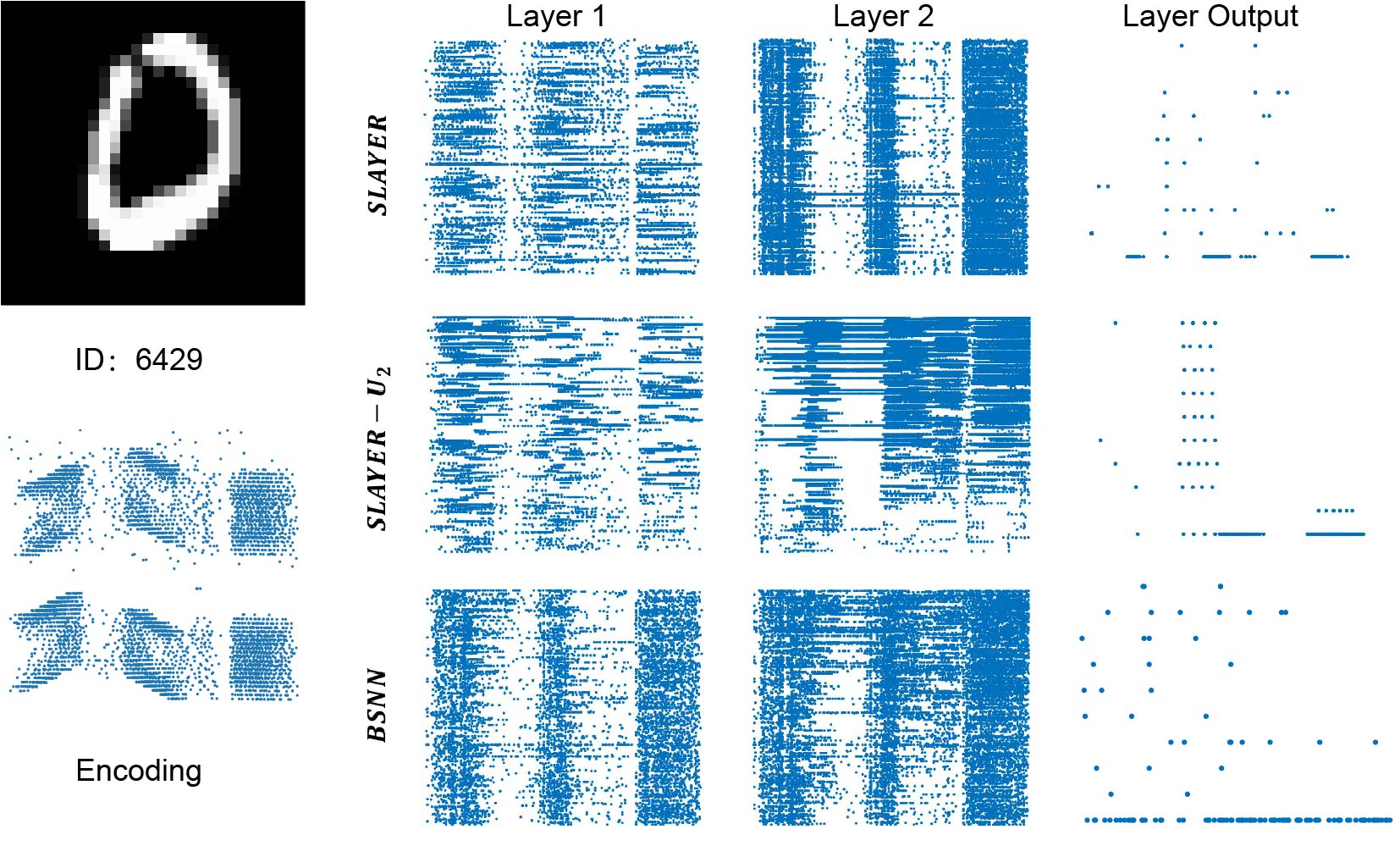} 
		\end{minipage}
	}
	\label{fig:rasters}
	\caption{The spike raster plots of SLAYER, SLAYER-$U_2$, and BSNN on (a) 4881 and (b) 6429.}
\end{figure}
To help understand the learning process of SNNs and highlight the difference between the contenders and BSNN, we illustrate the spike raster plots of SLAYER, SLAYER-$U_2$, and BSNN. We show the neuron excitation snapshots of these three approaches on the 4881-st and 6429-th MNIST testing samples with label 0 in Figure~\ref{fig:rasters}. The horizontal and vertical axes indicate the time interval and the sequence of spiking neurons or encoding channels, respectively. In detail, we pick up a representative instance, the 4881-st one, and then convert this image into a spike sequence using the Dynamic Vision Sensor, as shown in the left two subplots of Figure~\ref{fig:rasters}(a). It is observed that there are two fractures, one around time interval $[80,100]$ and the other around time interval $[200,230]$. The spike raster plots of spiking neurons (in Layer 1, Layer 2, and Layer Output) of SLAYER, SLAYER-$U_2$, and BSNN are successively shown in the right nine subplots. As mentioned in Section 4, a dissipating system would hinder the spike excitation. Thus, we can observe the plots of SLAYER with the following properties. (1) The neurons in the same hidden layer have almost the same firing rates. (2) The plots of Layer 1 and Layer 2 of SLAYER display two distinct fractures. Specifically, Layer 2 even enlarges the fracture. In practice, it is more likely to fall into the situation of ``dead neurons''. (3) In Layer Output, the neuron corresponding to label 0 has no obvious advantage over other neurons in terms of total spiking counts, leading this instance to be incorrectly classified as label 8. In contrast, both SLAYER-$U_2$ and BSNN have indefinite eigenvalues, where negative ones hinder the spike excitation, positive ones promote the spike excitation, and ones with zero are a conservative system. Thus, the developed models can dynamically determine the firing rates of spiking neurons. (1) The firing rates of spiking neurons in the same layer show significant differences. (2) The output spiking neurons relative to wrong labels are suppressed. In contrast, the neuron relative to the correct label is ``encouraged" to fire more spikes and eventually win with a significant advantage.

We also demonstrate the robustness of BSNN to the control rate. This experiment is conducted on the MNIST data set, setting the architecture of BSNN as 28$\times$28-500-500-10. The control rate hyper-parameters are optimized by grid search. For each control rate value, we ran BSNN 5 times, recorded the largest accuracy of each round within 50 epochs, and averaged five accuracy records as its performance. The results plotted in Figure~\ref{fig:curves}(a) show that BSNN is able to perform better than the alternating optimization algorithms in a broad-range setting of control rates. Besides, Figure~\ref{fig:curves}(b) and (c) show the learning curves with $\gamma=-0.21$ on MNIST. As we can see, BSNN converges fast and surpasses the contenders.
\begin{figure*}
	\centering
	\subfigure[]{
		\begin{minipage}[t]{0.315\textwidth}
			\includegraphics[width=1\textwidth]{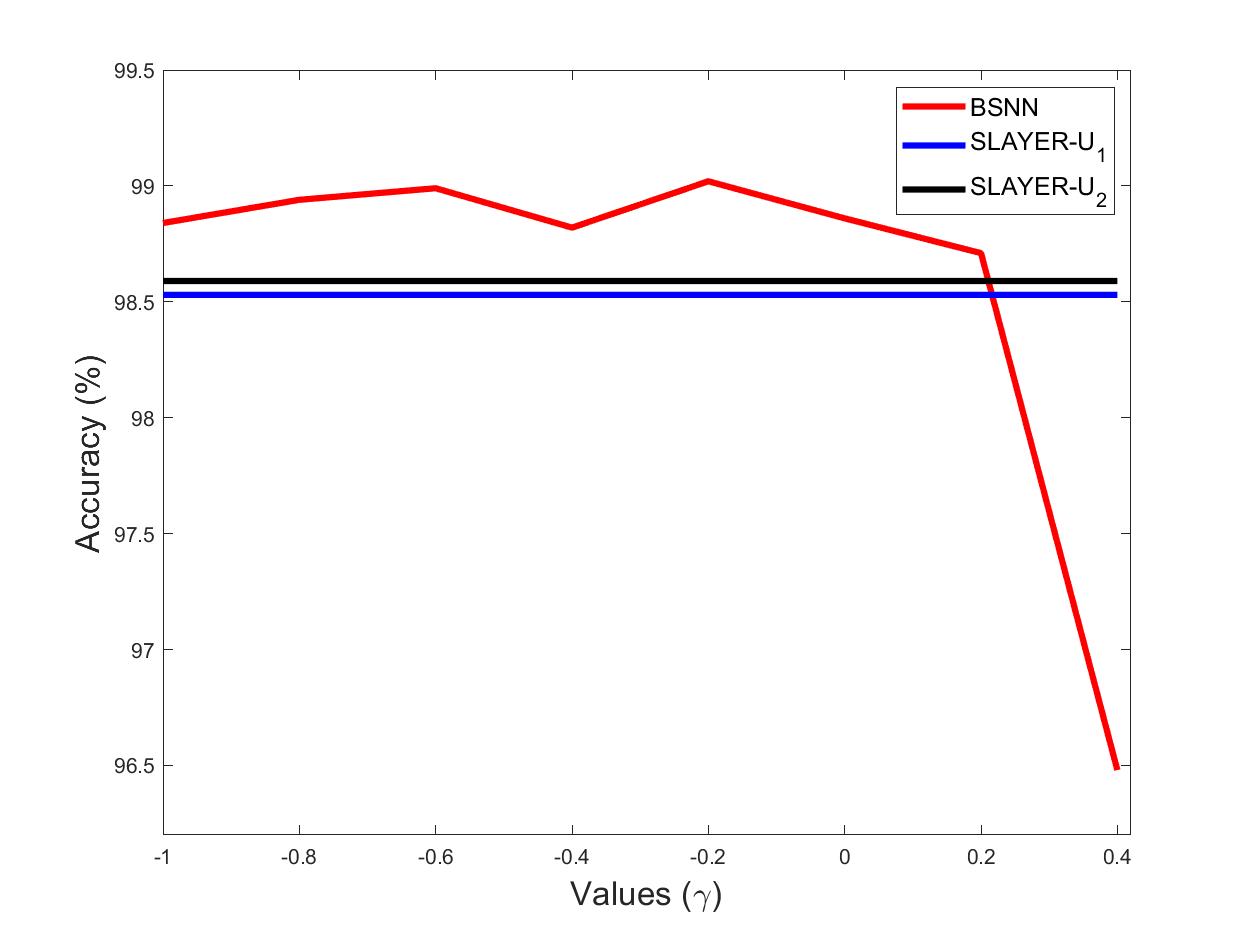}
		\end{minipage}
	}
	\subfigure[]{
		\begin{minipage}[t]{0.315\textwidth}
			\includegraphics[width=1\textwidth]{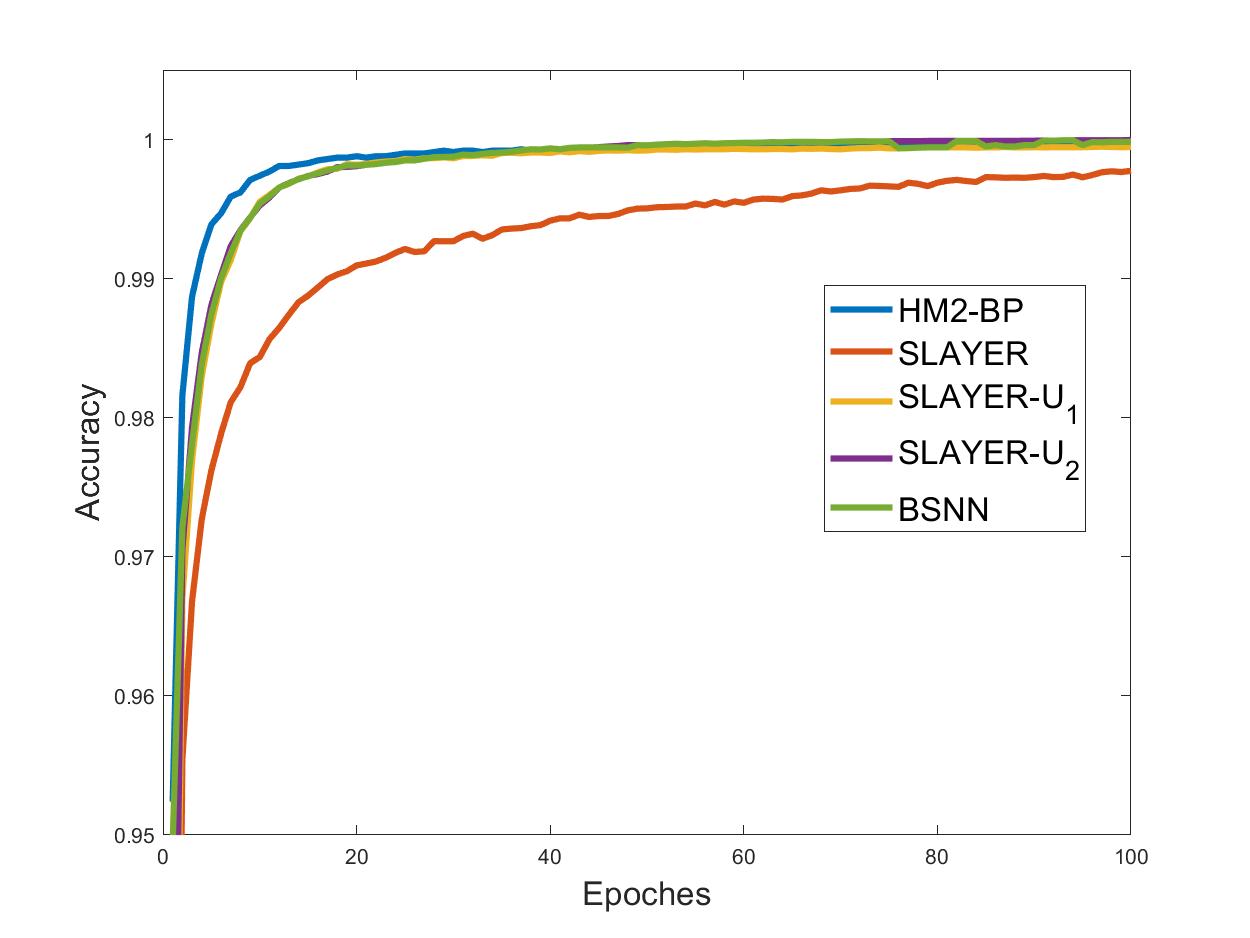} 
		\end{minipage}
	}
	\subfigure[]{
		\begin{minipage}[t]{0.315\textwidth}
			\includegraphics[width=1\textwidth]{Figs/curves_training.jpg} 
		\end{minipage}
	}
	\label{fig:curves}
	\caption{(a) Robustness of BSNN with respect to control rates. The training (b) and testing (c) accuracy curves of the contenders and BSNN on MNIST.}
\end{figure*}

Based on the experimental results and discussions above, we can conclude that BSNN achieves superior performance to the existing SNNs and the improved contenders - approaches based on alternating optimization algorithms. Additionally, the performance of BSNN is considerably robust to the setting of control rates.

\section{Conclusion and Discussions} \label{sec:Conclusion}
In this paper, we investigated spiking neural models and networks from the perspective of dynamical systems. We reveal the dynamical properties of spiking neural models by providing their energy function and conclude that the LIF model is a bifurcation dynamical system, where the control rates are the corresponding bifurcation hyper-parameters. Further, by employing the spiking neural model to enable adaptable eigenvalues, we proposed the \emph{Bifurcation Spiking Neural Network} (BSNN). Compared with the alternating optimization approaches, BSNN tackles the challenge that control rates interact with connection weights in the training procedure, leading to a robust setting of control rates. Besides, BSNN achieves a better accuracy with considerably less computation and storage in supervised classification tasks. The experiments verified the effectiveness of BSNN.

We introduced a mathematical framework for analyzing the spiking neural models, including and not limited to using a Lyapunov-like function to formulate the total energy, revealing the bifurcation characteristics of SNNs, and providing systematic guidance on hyper-parameter setting in SNNs. These results may contribute to the development of SNN-related theories. Besides, we also declare that our work doesn't aim to realize a biological learning phenomenon but to explore some new thoughts on SNNs. In this situation, Equation~\ref{eq:feedforward} that employs the last spikes of adjacent neurons to approximate the mutual promotion only provides a feasible paradigm of implementing dynamic bifurcation neurons. We are interested in scaling up our work.

\section*{Acknowledgment}
This research was supported by the National Science Foundation of China (61921006).
\nocite{zhang2020}.

\bibliography{reference}
\bibliographystyle{plain}

\end{document}